\newcommand{\sfp}{\textsf{b}}
\newcommand{\sfm}{\textsf{m}}
\newcommand{\sfo}{\textsf{o}}
\newcommand{\sfd}{\textsf{d}}
\newcommand{\sfs}{\textsf{s}}
\newcommand{\sff}{\textsf{f}}
\newcommand{\sfeq}{\textsf{eq}}
\newcommand{\sffi}{\textsf{fi}}
\newcommand{\sfsi}{\textsf{si}}
\newcommand{\sfoi}{\textsf{oi}}
\newcommand{\sfdi}{\textsf{di}}
\newcommand{\sfmi}{\textsf{mi}}
\newcommand{\sfpi}{\textsf{bi}}
\newcommand{\bdc}{{\bf DC}}
\newcommand{\bec}{{\bf EC}}
\newcommand{\beq}{{\bf EQ}}
\newcommand{\bpo}{{\bf PO}}
\newcommand{\btpp}{{\bf TPP}}
\newcommand{\bntpp}{{\bf NTPP}}
\newcommand{\qcm}{\mathcal{M}}
\newcommand{\opra}{\mathcal{OPRA}}
\newcommand{\mst}[3]{_{#1}\angle^{#3}_{#2}}
\newcommand{\step}{\textsc{Loop}}
\newcommand{\triad}{\textsc{Triad}}
\newcommand{\last}{\textsc{LastFound}}
\title{On A Semi-Automatic Method for Generating Composition Tables
}
\author{Weiming Liu \and Sanjiang Li}
\institute{Centre for Quantum Computation and Intelligent Systems,
       Faculty of Engineering and Information Technology, University of Technology
       Sydney, Australia}
\begin{document}

\maketitle

\begin{abstract}
Originating from Allen's Interval Algebra, composition-based reasoning has been widely acknowledged as the most popular reasoning technique in qualitative spatial and temporal reasoning. Given a qualitative calculus (i.e. a relation model), the first thing we should do is to establish its composition table (CT). In the past three decades, such work is usually done manually. This is undesirable and error-prone, given that the calculus may contain tens or hundreds of basic relations. Computing the correct CT has been identified by Tony Cohn as a challenge for computer scientists in 1995. This paper addresses this problem and introduces a semi-automatic method to compute the CT by randomly generating triples of elements. For several important qualitative calculi, our method can establish the correct CT in a reasonable short time. This is illustrated by applications to the Interval Algebra, the Region Connection Calculus RCC-8, the INDU calculus, and the Oriented Point Relation Algebras. Our method can also be used to generate CTs for customised qualitative calculi defined on restricted domains.


\end{abstract}

\section{Introduction}
Since Allen's seminal work of Interval Algebra (IA)  \cite{Allen81,Allen83}, qualitative calculi have been widely used to represent and reason about temporal and spatial knowledge. In the past decades, dozens of qualitative calculi have been proposed in the artificial intelligence area ``Qualitative Spatial \& Temporal Reasoning" and Geographic Information Science. Except IA, other well known binary qualitative calculi include the Point Algebra \cite{VilainK86}, the Region Connection Calculi RCC-5 and RCC-8 \cite{RandellCC92}, the INDU calculus \cite{PujariKS99}, the Oriented Point Relation Algebras $\opra$ \cite{Moratz06}, and the Cardinal Direction Calculus (CDC) \cite{Goyal00}, etc.

Relations in each particular qualitative calculus are used to represent temporal or spatial information at a certain granularity.  For example, The Netherlands is \emph{west} of Germany, The Alps \emph{partially overlaps} Italy, I have today an appointment with my doctor \emph{followed by} a check-up.

Given a set of qualitative knowledge, new knowledge can be derived by using constraint propagation. Consider an example in RCC-5. Given that The Alps \emph{partially overlaps} Italy and Switzerland, and Italy is a \emph{proper part of} the European Union (EU), and Switzerland is \emph{discrete from} the EU, we may infer that The Alps \emph{partially overlaps} the EU. The above inference can be obtained by using composition-based reasoning.  The composition-based reasoning technique has been extensively used in qualitative spatial and temporal reasoning, and, when combined with backtracking methods, has been shown to be complete in determining the consistency problem for several important qualitative calculi, including IA, Point Algebra, Rectangle Algebra, RCC-5, and RCC-8. Moreover, qualitative constraint solvers have been developed to facilitate composition-based reasoning \cite{WallgrunFWDF06,WestphalWG09}.

We here give a short introduction of the composition-based reasoning technique. Suppose $\qcm$ is a qualitative calculus, and $\Gamma=\{v_i\gamma_{ij} v_j\}_{i,j=1}^n$ is a constraint network over $\qcm$. The composition-based reasoning technique uses a variant of the well-known \emph{Path Consistency Algorithm},\footnote{The notion of Path Consistency is usually defined for constraints on finite domains, and not always appropriate for general qualitative constraints, which are defined on infinite domains.}  which applies the following updating rule until the constraint network becomes stable or an empty relation appears:
\begin{equation}\label{eq:pca}
\gamma_{ij}\leftarrow \gamma_{ij}\cap \gamma_{ik}\circ_w\gamma_{kj},
\end{equation}
where $\alpha\circ_w\beta$ is the \emph{weak composition}  (cf. \cite{LiY03,RenzL05}) of two relations $\alpha,\beta$ in $\qcm$, namely the smallest relation in $\qcm$ which contains the usual composition of $\alpha$ and $\beta$.  Although for OPRA and some other calculi the composition-based reasoning is incomplete to decide the consistency problem, it remains a very efficient method to approximately solve the consistency problem.

The weak composition in a qualitative calculus $\qcm$ is determined by its \emph{weak composition table} (CT for short). Usually, the CT of $\qcm$ is obtained by manually checking the consistency of $\{x\alpha y, y\beta z, x\gamma z\}$ for each triple of basic relations $\langle\alpha,\gamma,\beta\rangle$. When  $\qcm$ contains dozens or even hundreds of basic relations, this consistency-based method is undesirable and error-prone.   \cite{Cohn95} first noticed this problem and identified it as a challenge for computer scientists.

This problem remains a challenge today. We here consider several examples. The Interval Algebra and the RCC-8 algebra contain, respectively, 13 and 8 basic relations. Their CTs were established manually. But if a calculus contains  a hundred basic relations, we need to determine the consistency of one million such basic networks. This is manually impossible. The $\opra$ calculi and the CDC are large qualitative spatial calculi that have drawn increasing interests.  $\opra_m$ contains $4m\times (4m+1)$ (i.e. 72, 156, 272 for $m=2,3,4$, respectively) basic relations \cite{Moratz06}, while the CDC contains 218 basic relations \cite{Goyal00}.  Sometimes we need ingenious and special methods to establish CT for such a calculus. For the $\opra$ calculi,  the algorithm presented in the original paper \cite{Moratz06} contains gaps and errors.  Later, \cite{Frommberger+07} presented the second algorithm, which is quite lengthy and cumbersome. Another simple algorithm has also been proposed recently \cite{MM10}. Given the huge number of basic relations of $\opra_m$, the validity of these algorithms need further verification. As for the CDC, \cite{Goyal00} first studied the weak composition. Later, \cite{SkiadopoulosK04} noticed errors in Goyal's method and gave a new algorithm to compute the weak composition. Unfortunately, in several cases, their algorithm does not generate the correct weak composition (see \cite{LiuZLY10}).

In this paper, we respond to this challenge and propose a semi-automatic approach to generate CT for general qualitative calculi. In the remainder of this paper, we first recall basic notions and results about qualitative calculi and weak composition tables in Section 2, and then apply our method to IA, INDU, RCC-8, and $\opra_1$ and $\opra_2$ in Section 3. An analysis of the strength and weakness of our approach is given in Section~4. Section~5 then concludes the paper.

\section{Preliminaries}
In this section we recall the notions of qualitative calculi and their weak composition tables. Interested readers may consult e.g. \cite{LigozatR04,RenzL05} for more information.


\begin{definition}\label{dfn:qc}
Suppose $U$ is a universe of spatial or temporal entities, and $\mathcal{B}$ is a set of jointly exhaustive and pairwise disjoint (JEPD) binary relations on $U$. We call the Boolean algebra generated by $\mathcal{B}$ a \emph{qualitative calculus}, and call relations in $\mathcal{B}$ the \emph{basic relations} of this qualitative calculus.
\end{definition}

We consider a simple example.
\begin{example}[Point Algebra]
Suppose $U=\mathbb{R}$. For two points $a,b$ in $U$, we have either $a<b$, or $a=b$, or $a>b$. Let $\mathcal{B}=\{<,=,>\}$. Then $\mathcal{B}$ is a JEPD set of relations on $U$. We call the Boolean Algebra generated by $\mathcal{B}$ the Point Algebra.

\end{example}


We next recall the central notion of weak composition.


\begin{definition}\label{dfn:wcomposition}
Suppose $\qcm$ is a qualitative calculus on $U$, and $\mathcal{B}$ is the set of its basic relations. The \emph{weak composition} of two basic relations $\alpha$ and $\beta$ in $\qcm$, denoted as $\alpha\circ_w\beta$, is defined as the smallest relation in $\qcm$ which contains $\alpha\circ\beta$, the usual composition of $\alpha$ and $\beta$.
\end{definition}

Usually, a qualitative calculus has a finite set of relations. The weak composition operation of $\qcm$ can be summarised in an $n\times n$ table, where $n$ is the cardinality of $\mathcal{B}$, and the cell specified by $\alpha$ and $\beta$ contains all basic relations $\gamma$ in $\mathcal{B}$ such that  $\gamma\cap\alpha\circ\beta\neq\varnothing$.
The CT of the Point Algebra is given in Table~\ref{tab:PA}.
\begin{table}[htb]
\centering
\caption{The CT of the Point Algebra, where $\ast$ is the universal relation}\label{tab:PA}
\scalebox{1}{
\begin{tabular}{c|ccc}
  $\circ$ & $<$ & $=$ & $>$ \\  \hline
  $<$ & $<$ & $<$ & $\ast$ \\
  $=$ & $<$ & $=$ & $>$ \\
  $>$ & $\ast$ & $>$ & $>$
\end{tabular}
}

\end{table}

\begin{figure}[htb]
\centering
\includegraphics[width=.3\textwidth]{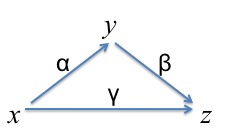}
\caption{A c-triad $\langle\alpha,\gamma,\beta\rangle$}
\label{fig:c-triad}
\end{figure}
\begin{definition}\label{dfn:c-triad}
Suppose $\qcm$ is a qualitative calculus on $U$ with basic relation set $\mathcal{B}$. For basic relations $\alpha,\beta,\gamma$, we call $\langle\alpha,\gamma,\beta\rangle$ a \emph{composition triad}, or \emph{c-triad}, if $\gamma\subseteq\alpha\circ_w\beta$.
\end{definition}
We can determine if a 3-tuple is a c-triad as follows (cf. Fig.~\ref{fig:c-triad}).
\begin{proposition}\label{prop:compute-ctriad}
A 3-tuple $\langle\alpha,\gamma,\beta\rangle$ of basic relations in $\qcm$ is a c-triad iff $\gamma\cap \alpha\circ\beta\not=\varnothing$, which is equivalent to saying that the basic constraint network
\begin{equation}\label{eq:c-triad}
\{x \alpha y, y\beta z, x\gamma z\}
\end{equation}
is consistent, i.e. it has a solution in $U$.
\end{proposition}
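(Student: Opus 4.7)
The plan is to unpack Definitions~\ref{dfn:qc}, \ref{dfn:wcomposition}, and \ref{dfn:c-triad} and reduce the claim to two elementary equivalences:
\[
\gamma\subseteq\alpha\circ_w\beta \iff \gamma\cap(\alpha\circ\beta)\neq\varnothing \iff \{x\alpha y,y\beta z,x\gamma z\}\text{ is satisfiable in }U.
\]

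First I would establish the left equivalence using the JEPD property. Since $\mathcal{B}$ is a JEPD partition of $U\times U$ and $\qcm$ is the Boolean algebra it generates, every relation in $\qcm$ is a (disjoint) union of basic relations. In particular, the smallest relation in $\qcm$ containing $\alpha\circ\beta$ admits a concrete description: it is exactly
\[
\alpha\circ_w\beta \;=\; \bigcup\bigl\{\delta\in\mathcal{B} : \delta\cap(\alpha\circ\beta)\neq\varnothing\bigr\}.
\]
Indeed, this union lies in $\qcm$ and contains $\alpha\circ\beta$ (because every pair in $\alpha\circ\beta$ falls into some basic relation $\delta$, which then meets $\alpha\circ\beta$); conversely any relation in $\qcm$ containing $\alpha\circ\beta$ must, being a union of basic relations, include each $\delta$ that meets $\alpha\circ\beta$. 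Now since the basic relations are pairwise disjoint, for a basic $\gamma$ we have $\gamma\subseteq\alpha\circ_w\beta$ iff $\gamma$ is one of the $\delta$'s appearing in the union, i.e.\ iff $\gamma\cap(\alpha\circ\beta)\neq\varnothing$. This gives $\langle\alpha,\gamma,\beta\rangle$ is a c-triad iff $\gamma\cap(\alpha\circ\beta)\neq\varnothing$.

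Next I would handle the right equivalence by simply expanding the definition of relational composition. By definition $(x,z)\in\alpha\circ\beta$ iff there is some $y\in U$ with $x\alpha y$ and $y\beta z$. Therefore $\gamma\cap(\alpha\circ\beta)\neq\varnothing$ iff there exist $x,z\in U$ with $x\gamma z$ and some $y\in U$ with $x\alpha y$ and $y\beta z$, which is precisely the statement that the network $\{x\alpha y, y\beta z, x\gamma z\}$ has a solution in $U$.

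The argument is essentially definitional, so there is no real obstacle; the only subtle point to state carefully is the explicit description of $\alpha\circ_w\beta$ as the union of basic relations that meet $\alpha\circ\beta$, which is where the JEPD hypothesis on $\mathcal{B}$ is used. Once that is in place, both equivalences follow by direct unfolding of the definitions.
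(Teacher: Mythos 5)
Your proof is correct: the paper states this proposition without proof, treating it as immediate from the definitions, and your argument is exactly the standard definitional unfolding it has in mind (the JEPD property giving $\alpha\circ_w\beta=\bigcup\{\delta\in\mathcal{B}:\delta\cap(\alpha\circ\beta)\neq\varnothing\}$, then expanding the definition of $\circ$). The only implicit assumption worth noting is that basic relations are nonempty, which is standard for qualitative calculi and needed for the direction $\gamma\subseteq\alpha\circ_w\beta\Rightarrow\gamma\cap(\alpha\circ\beta)\neq\varnothing$.
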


To compute the weak composition of $\alpha$ and $\beta$, one straightforward method is to find all basic relations $\gamma$ such that $\langle\alpha,\gamma,\beta\rangle$ is a c-triad.



\section{A General Method for Computing CT}
In this section, we propose a general approach to compute the composition table of a qualitative calculus $\qcm$ with domain $U$ and basic relation set $\mathcal{B}$. The approach is based on the observation that each triple of objects in $U$ derives a valid c-triad.
\begin{proposition}\label{lemma:abc}
Suppose $a,b,c$ are three objects in $U$. Then $\langle\rho(a,b),\rho(a,c),\rho(b,c)\rangle$ is a c-triad, where $\rho(x,y)$ is the basic relation in $\qcm$ that relates $x$ to $y$.
\end{proposition}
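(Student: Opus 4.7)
The plan is to reduce the claim directly to Proposition~\ref{prop:compute-ctriad}, which says that $\langle\alpha,\gamma,\beta\rangle$ is a c-triad precisely when the basic constraint network $\{x\alpha y,\, y\beta z,\, x\gamma z\}$ has a solution in $U$. So the task becomes exhibiting such a solution for the specific triple in the statement.

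First I would note that $\rho(x,y)$ is well-defined: since $\mathcal{B}$ is jointly exhaustive and pairwise disjoint, for any ordered pair $(x,y) \in U \times U$ there is exactly one basic relation in $\mathcal{B}$ that holds of $(x,y)$. Setting $\alpha = \rho(a,b)$, $\beta = \rho(b,c)$, $\gamma = \rho(a,c)$, I then take the instantiation $x := a$, $y := b$, $z := c$ and check the three atomic constraints of the network in (\ref{eq:c-triad}) one by one: the pair $(a,b)$ satisfies $\rho(a,b) = \alpha$ by definition, the pair $(b,c)$ satisfies $\rho(b,c) = \beta$, and the pair $(a,c)$ satisfies $\rho(a,c) = \gamma$. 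Hence the network $\{x\alpha y, y\beta z, x\gamma z\}$ is consistent.

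Applying Proposition~\ref{prop:compute-ctriad} in the direction from consistency to c-triad then yields that $\langle \rho(a,b),\rho(a,c),\rho(b,c) \rangle$ is a c-triad, which is exactly the conclusion.

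There is essentially no obstacle here; the content of the statement is really just a remark tying together the definition of $\rho$ with the characterization of c-triads via consistent basic networks. The only thing worth being careful about is the ordering convention in Definition~\ref{dfn:c-triad}: the middle slot $\gamma$ corresponds to the $x$--$z$ constraint, not the $y$--$z$ constraint, so one must match $\gamma$ with $\rho(a,c)$ (and not with $\rho(b,c)$) when invoking the proposition. Once that bookkeeping is right, the proof is immediate.
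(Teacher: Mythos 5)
Your proof is correct and matches the paper's intent: the paper states this proposition as an immediate observation (with no explicit proof), and the intended argument is exactly your reduction — instantiate $x:=a$, $y:=b$, $z:=c$ to witness consistency of the network in Eq.~(\ref{eq:c-triad}) and invoke Proposition~\ref{prop:compute-ctriad}. Your care with the slot convention (the middle entry being the $x$--$z$ relation $\rho(a,c)$) is exactly the right bookkeeping.
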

It is clear that six (different or not) c-triads can be generated if we consider all permutations of $a,b,c$.

To compute the CT of $\qcm$, the idea is to choose randomly a triple of elements in $U$ and then compute and record the c-triads related to these objects in a dynamic table. Continuing in this way, we will get more and more c-triads until the dynamic table becomes stable after sufficient large loops. The basic algorithm is given in Algorithm~\ref{algorithm1}, where $D$ is a subdomain of $U$,  $\Psi$ decides when the procedure terminates, \triad\ records the number of c-triads obtained when the procedure terminates, and \last\ records the time when the last triad is first recorded. For a calculus with unknown CT, the condition may be assigned with the form $\step\leq 1,000,000$ (i.e., the algorithm loops one million times), or $\step\leq \last+100,000$ (i.e., until no new c-triad is found in the last one hundred thousand loops), or their conjunction. If the CT is known and we want to double-check it, then the boundary condition could be set to $\triad< N$ to save time, where $N$ is the number of c-triads of the calculus.

\begin{algorithm}[htb]\label{algorithm1}
\caption{Computing the Composition Table of $\qcm$}
\KwIn{A subdomain $D$ of $\qcm$, and a boundary condition $\Psi$ related to $\qcm$}
\KwOut{The Composition Table $CT$ of $\qcm$}
Initialise $CT$\;

$\step\leftarrow 0$;

$\triad\leftarrow 0$;

$\last\leftarrow 0$;

\While{$\Psi$}
{

$\step\leftarrow \step+1$;

Generate triple of objects $(a,b,c)\in D^3$ randomly\;

$\alpha\leftarrow$ the basic relation between $a$ and $b$\;

$\beta\leftarrow$ the basic relation between $b$ and $c$\;

$\gamma\leftarrow$ the basic relation between $a$ and $c$\;

$\alpha^\prime\leftarrow$ the basic relation between $b$ and $a$\;

$\beta^\prime\leftarrow$ the basic relation between $c$ and $b$\;

$\gamma^\prime\leftarrow$ the basic relation between $c$ and $a$\;

\For{$\langle r,s,t\rangle\in\{\langle\alpha,\gamma,\beta\rangle$, $\langle \alpha^\prime,\beta,\gamma\rangle$, $\langle \gamma,\alpha,\beta^\prime\rangle$, $\langle \beta,\alpha^\prime,\gamma^\prime\rangle$, $\langle \beta^\prime,\gamma^\prime,\alpha^\prime\rangle$, $\langle \gamma^\prime,\beta^\prime,\alpha\rangle\}$}
{
\If {$\langle r,s,t\rangle$ is not in $CT$}{
Record triad $\langle r,s,t\rangle$ to $CT$\;
$\triad\leftarrow \triad+1$\;
$\last \leftarrow \step$;
}
}
}
\Return $CT$.

\end{algorithm}

We make further explanations here.

Suppose $\qcm$ is a qualitative calculus on $U$. Recall $U$ is often an infinite set. We need first to decide a finite subdomain $D$ of $U$, as computers only deal with numbers with finite precision.  Once $D$ is chosen, we run the loop, say, one million times. Therefore, one million instances of triples of elements in $D$ are generated. We then record all computed c-triads in a dynamic table. It is reasonable to claim that the table is stable if no new entry has been recorded after a long time (e.g. as long as the time has past to get all recorded c-triads).  Because $D$ is finite, Algorithm~\ref{algorithm1} will generate a stable table after  a sufficient large number of iterations.

We observe that a finite subdomain $D$ may restrict the possible c-triads if it is selected inappropriately. We introduce a notion to characterise the appropriateness of a subdomain.

\begin{definition}\label{dfn:complete-subdomain}
Suppose $\qcm$ is a qualitative calculus defined on the universe $U$. A nonempty subset $D$ of $U$ is called a \emph{3-complete} subdomain of $\qcm$ if each consistent basic network as specified in Eq.~\ref{eq:c-triad} has a solution in $D$.
\end{definition}
If  $D$ is a  3-complete subdomain, then, for each c-triad $\langle\alpha,\gamma,\beta\rangle$, there are $a,b,c$ in $D$ such that $(a,b)\in \alpha$,  $(b,c)\in\beta$, and $(a,c)\in\gamma$. Therefore, to determine the CT of $\qcm$, we need only consider instances of triples in $D$.

Note that no matter whether the subdomain $D$ is 3-complete, the algorithm always generates `valid' triads, in the sense that any 3-tuple $\langle\alpha,\gamma,\beta\rangle$ in the CT generated is indeed a c-triad of the calculus. However, the algorithm only converges to the correct CT when the subdomain $D$ is 3-complete.

It is of course important questions to find 3-complete subdomains or to decide if a particular subdomain is 3-complete.
However, it seems that there is no general answer for arbitrary qualitative calculi, since the questions are closely related to the semantics of the calculi. For a particular calculus, e.g. IA, this can be verified by formal analysis. Note that a superset of a 3-complete subdomain is also 3-complete. To make sure a chosen subdomain $D$ is 3-complete, we often apply the algorithm on several of its supersets at the same time. If the same number is generated for all subdomains, we tend to believe that $D$ is 3-complete and the generated table is the CT of $\qcm$. Note a formal proof is necessary to guarantee the 3-completeness of $D$.

Even if a CT of $\qcm$ has been somehow obtained, our method can be used to verify its correctness. Double-checking is necessary since computing the CT is error-prone (see the last paragraph of page~1). If there is a c-triad that does not appear in the previously given table, something must be wrong with the table, because the c-triads computed by Algorithm~\ref{algorithm1} are always valid. It is also possible that the algorithm terminates with a fragment of given composition table. We then can make theoretical analysis to see if the missing c-triads are caused by the incompleteness of the subdomain. If so, we modify the subdomain and run the algorithm again, otherwise, the missing c-triads are likely to be invalid c-triads.

Another thing we should keep in mind is how to generate a triple of elements  $(a,b,c)$ from $D$. Note that if $D$ is small (e.g. in the cases of PA and IA), we can generate all possible triples. If $D$ contains more than 1000 elements, then it will be necessary to generate the triples randomly as there are over a billion different triples. The distribution over $D$ may affect the efficiency of the algorithm. Assuming that we have very limited knowledge of the calculus $\qcm$, it is natural to take $a,b$ and $c$  independently with respect to the uniform distribution. We note that the better we understand the calculus, the more appropriate the distribution we may choose.

To increase the efficiency of the algorithm, we sometimes use the algebraic properties of the calculus. For example,  if the identity relation $id$ is a basic relation, then by $\alpha\circ_w id=\alpha=id\circ_w \alpha$ and $id \subseteq\alpha\circ_w\alpha^\sim$, we need not compute the c-triads involving $id$, where $\alpha^\sim$ is the converse of $\alpha$. This is to say, the algorithm only needs to generate pairwise different elements. As another example, suppose that the calculus is closed under converse, i.e. the converse of a basic relation is still a basic relation. Then in Algorithm~\ref{algorithm1} we need only compute $\alpha,\beta,\gamma$. The other relations and c-triads can be obtained by replacing $\alpha^\prime,\beta^\prime,\gamma^\prime$ in the algorithm by, respectively, $\alpha^\sim,\beta^\sim,\gamma^\sim$. Similar results have been reported in \cite{Bennett94}.

In the following we examine three important examples. All experiments were conducted on a 3.16 GHZ Intel Core 2 Duo CPU with 3.25 GB RAM running Windows XP.
Note the results rely on the random number generator. As our aim is to show the feasibility of the algorithm rather than investigating the efficiency issues, we only provide one group of the results and do not make any statistical analysis.

\subsection{The Interval Algebra and the INDU Calculus}
We start with the best known qualitative calculus.
\begin{example}[Interval Algebra]
Let $U$ be the set of closed intervals on the real line. Thirteen binary relations between two intervals $x=[x^-,x^+]$ and $y=[y^-,y^+]$ are defined in Table~\ref{tab:int}. The Interval Algebra \cite{Allen83} is the Boolean algebra generated by these thirteen JEPD relations.
\end{example}

\begin{table}[htb]\centering
\caption{Basic IA relations and their converses, where
$x=[x^-,x^+],y=[y^-,y^+]$ are two intervals.}\label{tab:int}
\scalebox{1}{
\begin{tabular}{|c|c|c|c|}
  \hline
  Relation & Symbol & Converse & Meaning  \\ \hline
 before & \sfp & \sfpi & $x^-<x^+<y^-<y^+$  \\
 meets & \sfm & \sfmi &  $x^-<x^+=y^-<y^+$ \\
 overlaps & \sfo & \sfoi & $x^-<y^-<x^+<y^+$ \\
 starts & \sfs & \sfsi & $x^-=y^-<x^+<y^+$ \\
 during & \sfd & \sfdi & $y^-<x^-<x^+<y^+$ \\
 finishes & \sff & \sffi & $y^-<x^-<x^+=y^+$  \\
 equals & \sfeq & \sfeq & $x^-=y^-<x^+=y^+$ \\
  \hline
\end{tabular}
}
\end{table}

\begin{table}[htb]\centering
\caption{Implementation for IA, where $\triad$ is the number of c-triads recorded by running the algorithm on $D_M$ for $M=4$ to $M=20$, $\last$ is the loop when the last triad is first recorded}\label{tab:impl_int}
\begin{tabular}{c|ccccccccccc}
$M$  & 4     & 5     & 6      &  7    & 8      & 9         & 10   &11    & 12\\ \hline
$\triad$ & 139 & 319 & 409  & 409 & 409  & 409     & 409 & 409 & 409 \\
$\last$  &  92   & 629 & 1501 & 878 & 2111 & 3517  & 728 & 697 & 932
\end{tabular}
\\
\begin{tabular}{c|cccccccccccccccc}
 $M$   & 13      &       14 & 15      & 16     &    17  & 18       & 19     &20 \\ \hline
$\triad$   & 409    & 409     & 409    & 409   & 409   & 409     & 409   & 409\\
$\last$      & 11212 & 20249 & 7335 & 4343 & 3632 & 17862 & 5533 & 43875
\end{tabular}
\end{table}

The CT for IA has been computed in 1983 in Allen's famous work. When applying Algorithm~\ref{algorithm1} to IA, we do not consider all intervals. Instead, we restrict the domain to the set of all intervals contained in $[0,M)$ that have integer nodes
$$D_{M}=\{[p,q]|p,q\in \mathbb{Z}, 0\leq p<q < M\},$$
and use uniform distribution to choose random intervals. It is easy to see that the size of the domain is $M(M-1)/2$.
Note that to converge fast and generate all entries, we need to choose an appropriate $M$. If $M$ is too small, then it is possible that some c-triads can not be instantiated. On the other hand, if $M$ is too big, relations that require one or more exact matches (such as $\sfm$ in IA and $\sfm^=$ in the INDU calculus to be introduced in the next example) is very hard to generate, i.e. the probability of generating such an instance is very small. For a new qualitative calculus, there is no general rules for choosing $M$. Usually, pilot experiments are necessary to better understand the characteristics of the calculus. 

Table~\ref{tab:impl_int} summarises the results for $M=4$ to $M=20$. In the experiment, we generate one million instances of triples of elements for each domain $D_M$.  In all cases the dynamic table becomes stable in less than 50,000 loops. When the table becomes stable, the numbers of triads computed are not always the correct one (that is 409). This is mainly because the domain is too small. For $M$ bigger than or equal to six, we always get the correct number of triads.\footnote{The 3-completeness of $D_6$ follows from the fact that each consistent IA network involving three variables has a solution in $D_6$.} The loops needed (i.e. \last) vary from less than a thousand to more than 43 thousand (see Table~\ref{tab:impl_int}). In general, the smaller the domain is the more efficient the algorithm is.

\begin{table}[htb]\centering
\caption{Implementation for INDU, where $\triad$ is the number of c-triads recorded by running the algorithm on $D_M$ for $M=6$ to $M=20$, $\last$ is the loop when the last triad is first recorded}\label{tab:impl_indu}
\scalebox{1}{
\begin{tabular}{c|ccccccccccc}
$M$       &   6     & 7       &    8      &    9      & 10       & 11      &   12       & 13     \\ \hline
$\triad$ & 1045  & 1531 & 1819   & 1987   & 2041   & 2053  & 2053     & 2053\\
$\last$  &  3766 & 5753 & 10417 & 35201 &35891  & 25031& 12512   & 27728
\end{tabular}
}
\\
\scalebox{1}{
\begin{tabular}{c|ccccccccccc}
$M$ &        14       & 15     & 16      & 17     &  18   & 19      & 20  \\ \hline
$\triad$   & 2053   & 2053  & 2053 & 2053 & 2053 & 2053  & 2053\\
$\last$ & 17223 &24578&14758   &22491&29034 &49693 & 19772
\end{tabular}
}

\end{table}

\begin{example}[INDU calculus]\label{ex:indu}
The INDU calculus \cite{PujariKS99} is a refinement of IA. For each pair of intervals $a,b$, INDU allows us to compare the durations of $a,b$. This means, some IA relations may be split into three sub-relations. For example, $\sfp$ is split into three relations $\sfp^<,\sfp^=,\sfp^>$. Similar situations apply to $\sfm, \sfo, \sfoi,\sfmi$, and $\sfpi$. The other seven relations have no proper sub-relations.  Therefore, INDU has 25 basic relations.
\end{example}

INDU is quite unlike IA. For example, it is not closed under composition, and a path-consistent basic network is not necessarily consistent \cite{BalbianiCL06}.

Applying our algorithm to INDU, we use the same subdomain $D_M$ as for IA. From Table~\ref{tab:impl_indu} we can see that $D_6$ is no longer 3-complete: more than 1000 c-triads do not appear in the stable table. The table becomes complete in $D_{11}$, which has 2053 c-triads. The 3-completeness of $D_{11}$ is confirmed by the following proposition. 
\begin{proposition}\label{prop:indu-2053}
The INDU calculus has at most 2053 c-triads.
\end{proposition}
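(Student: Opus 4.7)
The plan is to prove the bound by extracting two necessary conditions on INDU c-triads from the structure of INDU, and then verifying that the resulting upper bound equals $2053$ by a finite tabulation. Every INDU basic relation $r$ carries two pieces of data: an underlying IA basic relation $r_I$ obtained by ignoring duration, and a Point Algebra relation $\pi(r)\in\{<,=,>\}$ comparing the durations of the two intervals. Among the 13 IA basic relations, the seven ``fixed-duration'' ones ($\sfeq,\sfs,\sfsi,\sfd,\sfdi,\sff,\sffi$) already determine $\pi(r)$ from the definitions in Table~\ref{tab:int}, while the six ``flexible'' ones ($\sfp,\sfm,\sfo,\sfoi,\sfmi,\sfpi$) each split into three INDU refinements, accounting for the $7+6\cdot 3=25$ basic relations of Example~\ref{ex:indu}.

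Suppose $\langle\alpha,\gamma,\beta\rangle$ is an INDU c-triad witnessed by intervals $x,y,z$. Then (i) $\langle\alpha_I,\gamma_I,\beta_I\rangle$ is realized by the same triple and is therefore an IA c-triad, and (ii) the duration triple $\langle\pi(\alpha),\pi(\gamma),\pi(\beta)\rangle$ is realized by $(d(x),d(y),d(z))$ and is therefore a Point Algebra c-triad, by Proposition~\ref{prop:compute-ctriad}. Writing $P(r_I)\subseteq\{<,=,>\}$ for the singleton determined by a fixed $r_I$ and $\{<,=,>\}$ for a flexible $r_I$, the number of INDU c-triads is therefore bounded above by
\begin{equation*}
\sum_{\langle\alpha_I,\gamma_I,\beta_I\rangle \in \text{IA CT}} \bigl|\{(p_1,p_2,p_3)\in P(\alpha_I)\times P(\gamma_I)\times P(\beta_I):\langle p_1,p_2,p_3\rangle\text{ is a PA c-triad}\}\bigr|,
\end{equation*}
where the outer index ranges over Allen's $409$ IA c-triads and the inner set is read off Table~\ref{tab:PA}.

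To evaluate the right-hand side I would partition the $409$ IA c-triads by their \emph{flexibility profile} — which of $\alpha_I,\gamma_I,\beta_I$ are flexible, giving $2^3 = 8$ patterns — since within each pattern the inner count depends only on the PA values occupying the fixed positions, and a short structured enumeration over these eight patterns together with the 13 entries of Table~\ref{tab:PA} then pins the sum to $2053$. Loose estimates such as $13^3$ (all PA triples per IA c-triad) or $|\text{IA CT}|\cdot|\text{PA CT}| = 409\cdot 13$ are far too weak, so both projections must be used together.

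The main obstacle is that this final step is pure arithmetic rather than a clever argument, and is most robustly carried out by a short script that walks through Allen's CT together with Table~\ref{tab:PA}. One conceptual caveat worth flagging: the displayed bound is genuinely only an upper bound, because a duration triple satisfying both projections might still fail to be jointly realizable by intervals in $U$; the apparent tightness of the bound suggested by Table~\ref{tab:impl_indu} is a separate empirical matter outside the scope of this ``at most'' statement.
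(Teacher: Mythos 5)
Your proof is correct and follows essentially the same route as the paper's own sketch: project an INDU c-triad onto its IA component and its duration (Point Algebra) component, use the fact that the seven relations $\sfd,\sfs,\sff,\sfeq,\sfsi,\sffi,\sfdi$ force their duration label, and then count the triples compatible with both projections by a finite enumeration to get $2053$. Your extra care in organising the count by flexibility profiles and in flagging that the bound is a priori only an upper bound matches the paper's intent, so no changes are needed.
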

\begin{proof}[Sketch]
For any three INDU relations $\alpha^{\star_1},\beta^{\star_2}, \gamma^{\star_3}$  $(\star_1,\star_2,\star_3\in\{<,=,>\}$), it is easy to see that $\langle \alpha^{\star_1},\gamma^{\star_2},\beta^{\star_3}\rangle$ is a valid c-triad of INDU only if $\langle \alpha,\gamma,\beta\rangle$ is a valid c-triad of IA and $\langle\star_1,\star_2,\star_3\rangle$ is a valid c-triad of PA. We note that for IA relations in $\{\sfd,\sfs,\sff,\sfeq,\sfsi,\sffi,\sfdi\}$, only $\sfd^<,\sfs^<,\sff^<,\sfeq^=,\sfsi^>,\sffi^>,\sfdi^>$ are valid INDU relations. It is routine to check that there are only 2053 triples of INDU relations that satisfy the above two constraints. We recall that IA has 409 c-triads (see Table~\ref{tab:impl_int}), and PA has 13 c-triads (see Table~\ref{tab:PA}).  \qed
\end{proof}
Since 2053 valid c-triads are recorded by running the algorithm on $D_{11}$ for INDU, we know INDU has precisely 2053 c-triads, and $D_{11}$ is 3-complete for INDU. Moreover, we have that $\langle \alpha^{\star_1},\gamma^{\star_2},\beta^{\star_3}\rangle$ is a valid c-triad of INDU \emph{if and only if} $\langle \alpha,\gamma,\beta\rangle$ is a valid c-triad of IA and $\langle\star_1,\star_2,\star_3\rangle$ is a valid c-triad of PA.  

It seems that this is the first time that the CT of INDU has been computed. 

\subsection{The Oriented Point Relation Algebra}
In the $\opra_m$ calculus, where $m$ is a parameter characterizing its granularity, each object is represented as an oriented point (\emph{o-point} for short) in the plane. Each o-point has an orientation. Based on which, $2m-1$ other directions are introduced according to the chosen granularity. Any other o-point is located on either a ray or in a section between two consecutive rays. Each of these rays and sections is assigned an integer from 0 to $4m-1$. The relative directional information of two o-points $A,B$ is uniquely encoded in a pair of integer numbers $(s,t)$, where $s$ is the ray or section of $A$ in which $B$ is located, and $t$ is the ray or section of $B$ in which $A$ is located. Such a relation is also written as $A {\mst m s t} B$.
In the case that the locations of $A$ and $B$ coincide, the relation between $A$ and $B$ is written as ${\mst m s {}}B$, where $s$ is the ray or section of $A$ in which the orientation of $B$ is located. Therefore, there are $4m(4m+1)$ basic relations in $\opra_m$.

\begin{figure}[htb]
\centering
\begin{tabular}{cc}
\includegraphics[width=.45\textwidth]{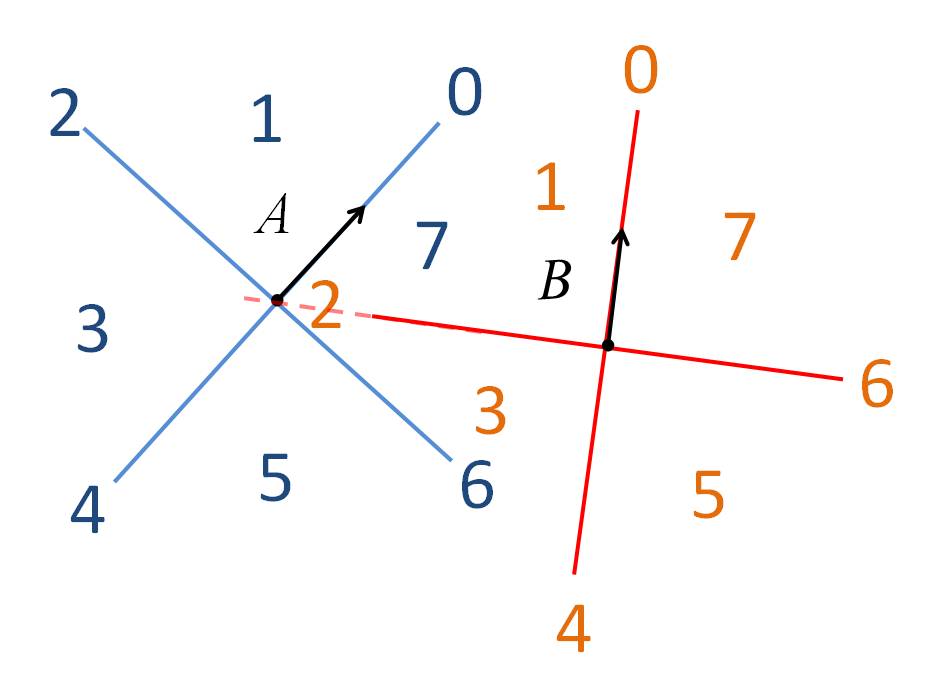}
&
\includegraphics[width=.3\textwidth]{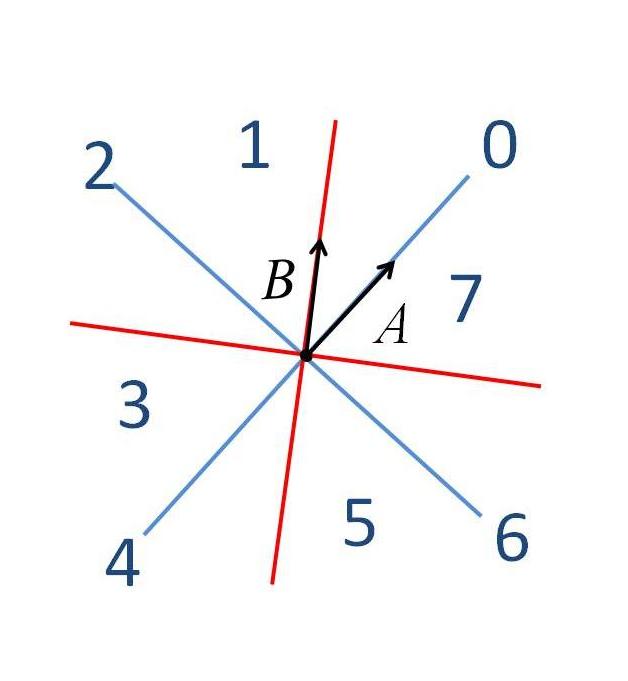}
\\
(a) & (b)
\end{tabular}
\caption{Two o-points $A,B$ with the $\opra_2$ relation (a) $\mst 2 7 2$ and (b) $\mst 2 1 {}$.}
\label{fig:opra-ex1}
\end{figure}

There are two natural ways to represent o-points. One uses the Cartesian coordinate system, the other use polar coordinate system. We next show the choice of coordinate system will significantly affect the experimental results, which are compared with that of \cite{MM10}.

In the Cartesian coordinate system, an o-point $P$ is represented by its coordination $(x,y)$ and its orientation $\phi$.
\begin{definition}\label{dfn:D(M1,M2)}
Let $M_1$ and $M_2$ be two positive integers. We define a Cartesian based subdomain of $\opra_m$ as
\begin{equation*}
\label{eq:D(M1,M2)}
D_c(M_1,M_2)=\{((x,y),\phi):x,y\in [-M_1,M_1]\cap\mathbb{Z},\phi\in\Phi_{M_2}\},
\end{equation*}
where $\Phi_{M_2}\equiv\{0, 2\pi/M_2, \cdots, (M_2-1)/M_2\times 2\pi\}$.
\end{definition}

\begin{table}[htb]\centering
\caption{Implementation for $\opra_1$ on a Cartesian coordinated domain $D_c(M_1,M_2)$, where $\triad$ is the number of c-triads computed by running the algorithm on $D_c(M_1,M_2)$ for $M_1=6$; \last\ is the loop when the last triad is first recorded for $M_2=8$ (in the 2nd last row) and $M_2=16$ (in the last row)}\label{tab:impl_opra1}
\begin{tabular}{c|ccccccccc}
$M_2$   &2  &3   &4   &5   &6   &8   &10    &12  &16\\ \hline
$\triad$&148&1024&1056&1024&1024&\textbf{1440}&1024&1408&\textbf{1440}
\end{tabular}
\\
\begin{tabular}{c|ccccc}
$M_1$          &2     &4      &6      &8        &10       \\ \hline
\last\ ($M_2=8$) &8082  &35932  &411893 &881787   &$>1000000$ \\
\last\ ($M_2=16$)&18618 &295936 &174490 &$>1000000$ &$>1000000$
\end{tabular}
\end{table}

Our experimental results show that, for $\opra_1$, the algorithm converges and generates the correct CT for subdomains with $M_1\geq 2$ and $M_2\in\{8,16\}$. That is, the smallest 3-complete subdomain is $D_c(2,8)$.

For $\opra_2$, however, the algorithm does not compute the desired CT in ten million loops. Actually, it is impossible to compute the desired CT if we use Cartesian coordination. Consider the following example. Suppose $A,B,C$ are three o-points, such that $\triangle ABC$ is an acute triangle, and the orientation of $A$ is the same as the direction from $A$ to $B$, the orientations of $B$ and $C$ are similar. In this configuration, we have  $A \mst 2 0 1 B$,  $B \mst 2 0 1 C$, and $A \mst 2 1 0 C$. This configuration, however, cannot be realised in a Cartesian based subdomain.\footnote{The proof of this statement is much involved and omitted in this paper.}

\begin{table}\centering
\caption{Implementation for $\opra_2$ on a Cartesian coordinated domain $D_c(M_1,M_2)$, where $\triad$ is the number of c-triads computed by running Algorithm~\ref{algorithm1} ten million times on $D_c(M_1,M_2)$ for $M_1=6$}\label{tab:impl_opra2}
\scalebox{1}{
\begin{tabular}{c|ccccccc}
$M_2$   &2    &4  &6    &8    &10   &12    &16\\ \hline
$\triad$&2704&2704&21792&23616&21792&21792 &35232
\end{tabular}
}
\end{table}

Based on the above observation, we turn to the polar coordinated representation. In the polar coordinate system, an o-point $P$ is represented by its polar coordination $(\rho,\theta)$ and its orientation $\phi$.
\begin{definition}\label{dfn:Do(M1,M2)}
Let $M_1$ and $M_2$ be two positive integers. We define a polar coordinated subdomain of $\opra_m$ as
\begin{equation*}
\label{eq:D(M1,M2)}
D_p(M_1,M_2)=\{((\rho,\theta),\phi):\rho\in [0,M_1]\cap\mathbb{Z},\theta,\phi\in\Phi_{M_2}\},
\end{equation*}
where $\Phi_{M_2}\equiv\{0, 2\pi/M_2, \cdots, (M_2-1)/M_2\times 2\pi\}$.
\end{definition}

As in Cartesian based subdomains, the parameter $M_2$ determines if a domain is complete, while $M_1$ determines the efficiency of the algorithm. For  $\opra_1$, we have $D(M_1,M_2)$ is a 3-complete subdomain if $M_1\geq 2$ and $M_2=6,8,10,12,16$ (see Table~\ref{tab:impl_opra1b-polar}); for $\opra_2$, we have $D(M_1,M_2)$ is 3-complete if $M_1\geq 4$ and $M_2=6,10,12,16$ (see Table~\ref{tab:impl_opra2-polar}).

\begin{table}[htb]\centering
\caption{Implementation for $\opra_1$ on a polar coordinated domain $D_p(M_1,M_2)$, where $\triad$ is the number of c-triads computed by running the algorithm on $D_p(M_1,M_2)$ for $M_1=6$; \last\ is the loop when the last triad is first recorded for $M_2=8$ (in the 2nd last row) and $M_2=16$ (in the last row)}
\label{tab:impl_opra1b-polar}
\scalebox{1}{
\begin{tabular}{c|ccccccccc}
$M_2$   &2  &3    &4    &5    &6           &8           &10          &12          &16 \\ \hline
$\triad$&52 &1024 &1032 &1408 &\textbf{1440} &\textbf{1440} &\textbf{1440} &\textbf{1440} &\textbf{1440}
\end{tabular}
}

\scalebox{1}{
\begin{tabular}{c|ccccc}
$M_1$            &4     &6     &8      &10    &16\\ \hline
\last\ ($M_2=8$) &3072	&4868  &22327  &10363 &38843\\
\last\ ($M_2=16$)&26219	&45831 &121542 &71205 &146536
\end{tabular}
}

\end{table}

\begin{table}[htb]\centering
\caption{Implementation for $\opra_2$ on a polar coordinated domain $D_p(M_1,M_2)$, where $\triad$ is the number of c-triads computed by running the algorithm on $D_p(M_1,M_2)$ for $M_1=6$}\label{tab:impl_opra2-polar}
\scalebox{1}{
\begin{tabular}{c|cccccccc}
$M_2$   &2   &3     &4    &6            &8     &10           &12           &16 \\ \hline
$\triad$&400 &24672 &2128 &\textbf{36256} &23616 &\textbf{36256} &\textbf{36256} &\textbf{36256}
\end{tabular}
}

\end{table}

\subsection{The Region Connection Calculus}
Our algorithm works very well for simple objects like points and intervals. We next consider a region-based topological calculus RCC-8. It is worth noting that an automated derivation of the composition table was reported in \cite{Egenhofer94} for a similar calculus (the 9-intersection model).

\begin{example}[RCC-8 algebra]\label{ex:RCC-8}
Let $U$ be the set of bounded plane regions (i.e. nonempty regular closed sets in the plane). Five binary relations are defined in Table~\ref{tab:RCC-8}. The RCC-8 algebra \cite{RandellCC92} is the Boolean algebra generated by these five relations, the identity relation \beq, and the converses of \btpp\ and \bntpp.
\end{example}

\begin{table}[htb]\centering
\caption{A topological interpretation of basic RCC-8 relations in the plane, where
$a,b$ are two bounded plane regions, and $a^\circ,b^\circ$ are the interiors of $a,b$, respectively.}\label{tab:RCC-8}
\scalebox{1}{
\begin{tabular}{c|c}
  Relation & Meaning  \\ \hline
 \bdc & $a\cap b=\varnothing$  \\
 \bec & $a\cap b\not=\varnothing$, $a^\circ\cap b^\circ=\varnothing$ \\
 \bpo & $a\not\subseteq b$, $b\not\subseteq a$, $a^\circ\cap b^\circ\not=\varnothing$\\
 \btpp & $a\subset b$, $a\not\subset b^\circ$ \\
 \bntpp & $a\subset b^\circ$\\
\end{tabular}
}
\end{table}

\begin{table}\centering
\caption{Implementation for RCC-8, where $\triad$ is the number of c-triads computed by running the algorithm on $D_M$ using rectangles, \last\ is the loop when the last triad is first recorded}\label{tab:impl_rcc-8-rectangle}
\scalebox{1}{
\begin{tabular}{c|cccccccccccccccc}
$M$        &       4   &      5  &      6       &    8      &    10      &   15        &    20 \\ \hline
$\triad$&     114 & 177   &  192         & 192     &  192       & 192        & 192\\
$\last$ & 14776 & 6513 & 2332646 &  56067 &  198255 &  261729 & 1521173
\end{tabular}}
\end{table}

Plane regions are much more complicated to represent than intervals or o-points. In most cases they are approximated by polygons or digital regions (i.e., a subset of $\mathbb{Z}^2$). Furthermore, it is natural to take a shot on simple objects at the beginning, since they are easy to deal with and important in applications. For RCC-8, we make experiments over two subdomains: rectangles and disks. The experiments show that these subdomains are good enough for our purpose, but when necessary, we could also consider general polygons or bounded digital regions.

We first consider subdomains whose elements are rectangles sides of which are parallel to the two axes. We introduce one parameter $M$, and require the four nodes be points in $[0,M)\times[0,M) \cap \mathbb{Z}^2$. The complete RCC-8 CT has 193 table entries. Since $\beq\circ\beq=\beq$, we know $\langle\beq,\beq,\beq\rangle$ is a c-triad. The other 192 c-triads can be confirmed using our algorithm. In Table~\ref{tab:impl_rcc-8-rectangle}, we show the results of running the algorithm 10 million times and require $M$ vary from 4 to 20. We can see from the table that $D_M$ is a 3-complete subdomain only if $M\geq 6$.

\begin{table}[htb]\centering
\caption{Implementation for RCC-8, where $\triad$ is the number of c-triads computed by running the algorithm on $D_M$ using disks, \last\ is the loop when the last triad is first recorded}\label{tab:impl_rcc-8-disk}
\scalebox{1}{
\begin{tabular}{c|cccccccccccccccc}
$M$        &       4   &      5  &      6       &    8       &      10      &   15        &    20 \\ \hline
$\triad$ &    188  & 192  &   192       &192      &  192     & 192        & 192\\
$\last$  & 1759   & 8913 &   9489    & 25955 &   113757&  942914 & 2961628
\end{tabular}}

\end{table}

We next consider subdomains consisting of disks (see Table~\ref{tab:impl_rcc-8-disk}). We introduce one parameter $M$, and require $x,y\in [0,M]\cap\mathbb{Z}$, $r\in [1,M]\cap\mathbb{Z}$, where $(x,y)$ and $r$ are, respectively, the centre and the radius of the closed disk $B((x,y),r))$. In this case, $M=5$ is good enough to generate all c-triads. We notice that the number of loops needed (i.e. \last) increases quickly as $M$ increases. For example, when $M=20$, the dynamic table becomes stable after nearly 3 million loops. This is mainly due to that an instance of the c-triad $\langle\bntpp,\bntpp,\bntpp\rangle$ is very hard to generate. The `hard' c-triad is, however, easy to prove.






\section{Further Discussions}
In the last section, we have applied our algorithm to generate the CTs of IA, INDU, $\opra_1$, $\opra_2$, and RCC-8. In this section, we discuss the advantages and disadvantages of our method. The algorithm works very well for simple objects like points, intervals, rectangles, and disks, especially in a small subdomain. For a qualitative calculus with less than 100 basic relations, it can compute the CT in a few minutes.

We also considered larger calculi. The Oriented Point Relation Algebras $\opra_3$ and $\opra_4$ have, respectively, 156 and 272 basic relations. Applying our algorithm to an appropriate polar coordinated subdomain $D_p(M_1,M_2)$, 261,576 and 1,082,752 c-triads, respectively, have been found in a few hours, which coincide with those computed in \cite{MM10}. This implies that the corresponding subdomains are 3-complete.

For calculi defined over regions, the main obstacle of using our approach is the cost of generating random regions. For RCC-8, we circumvent this obstacle by taking rectangles and disks. But this circumvention does not work for the Cardinal Direction Calculus (CDC) \cite{Goyal00}, as the shape of the region matters in this calculus. The CDC contains 218 basic relations. We run our algorithm for the CDC on the subdomain containing digital regions in $[0,5]\times [0,5]$\footnote{The 3-completeness of this subdomain is confirmed by results reported in \cite{LiuZLY10}.}, using normal distribution. The result is not ideal. After one day, we have computed about 60\% of the total 2.2 million c-triads of the CDC. Improvements will be made later, adopting more appropriate or heuristic distribution.

In many applications of qualitative calculi, the objects used are often restricted. Take $\opra_2$ as example. In many real world applications, e.g. the  Interstate Highway System of the USA, oriented objects are all taken from a underlying graph. In these cases, each o-point has only a few possible directions. To support reasoning with this domain, we had better have a \emph{customised} $\opra_2$ calculus, together with a customised CT.  Our algorithm works perfect to this end. For example, consider the restriction of $\opra_2$ calculus on $\mathbb{Z}^2$. Each o-point in this calculus has integer coordinations and has one of the four orientations from $\{0,\pi/2,\pi,3\pi/2\}$. Using our algorithm, the CT for this customised calculus has been generated in a few minutes. Experiment result shows that this customised calculus has 2704 c-triads.


Even for the well-known Interval Algebra, our algorithm suggests a new viewpoint for efficient reasoning. We note that, based on our method, we can easily compute the \emph{probability} of each basic relation in the weak composition of any two basic IA relations. This may be used in approximate temporal reasoning, especially when
the application domain has a different structure than the universe of IA. Work towards this direction will be reported in another paper.

\section{Conclusion}
In this paper, we introduced a general and simple semi-automatic method for computing the composition tables of qualitative calculi. The described method is a very natural approach, and similar idea was used to derive composition tables for an elaboration of RCC with convexity \cite{Cohn+93}, and for a ternary directional calculus \cite{Clementini+10}. The table computed in \cite{Cohn+93} was acknowledged there as incomplete. The table computed in \cite{Clementini+10} is complete, but its completeness was guaranteed by manually checking all geometric configurations that satisfy the table.  Except these two works, very little attention has been given to this natural approach in the literature on composition tables. We think a systematic examination is necessary to discover both the strong and weak points of this approach.

We implemented the basic algorithm for several well-known qualitative calculi, including the Interval Algebra, INDU, $\opra_m$ for $m=1\sim 4$, and RCC-8. Our experiments suggest that the proposed method works very well for point-based calculi, but not so well for region-based calculi. In particular, we established, as far as we know, for the first time the correct CT for INDU, and confirmed the validity of the algorithm reported for the $\opra$ calculi \cite{MM10}. Our method can be easily integrated into existing qualitative solvers e.g. SparQ \cite{WallgrunFWDF06} or GQR \cite{WestphalWG09}. This provides a partial answer to the challenge proposed in \cite{Cohn95}.

Recently, Wolter proposes (in an upcoming article \cite{Wolter11}) to derive composition tables by solving systems of polynomial (in)equations over the reals. This approach works well for several point-based calculi, but not always generates the complete composition table.  

Our method relies on the assumption that the qualitative calculus has a small `discretised' 3-complete subdomain. All calculi considered in this paper satisfy this property. It is still open whether all interesting calculi appeared in the literature satisfy this property. Future work will also discuss the applications of our method for reasoning with a customised composition table.

\bibliographystyle{splncs}
\bibliography{proco}
\end{document}